%% file: main.tex
\pdfoutput=1
\documentclass{article}
\usepackage{icomp2026_conference,times}

\input{math_commands.tex}

\usepackage{hyperref}
\usepackage{url}
\usepackage{amsmath}
\usepackage{amssymb}
\usepackage{amsthm}
\usepackage{booktabs}
\usepackage{array}
\usepackage{graphicx}
\usepackage{tikz}
\usetikzlibrary{arrows.meta,positioning}
\usepackage{cleveref}
\newtheorem{proposition}{Proposition}
\newtheorem{corollary}{Corollary}

\title{TempShap: When and How Much a Feature Matters in Time-Series Forecasting}

\author{%
\hspace*{-\tabcolsep}%
\makebox[\textwidth][c]{%
\begin{tabular}{@{}c@{\hspace{2.2em}}c@{}}
Chi Zhao
&
Anastasiia Chelnokova
\\[-0.15em]
{\small\normalfont\texttt{chi.zhao@spbu.ru}}
&
{\small\normalfont\texttt{st133927@student.spbu.ru}}
\\[0.65em]
Jing Liu
&
Elena Parilina
\\[-0.15em]
{\small\normalfont\texttt{st082130@student.spbu.ru}}
&
{\small\normalfont\texttt{e.parilina@spbu.ru}}
\\[0.85em]
\multicolumn{2}{c}{%
  \normalfont Saint Petersburg State University%
}
\\
\multicolumn{2}{c}{%
  \small\normalfont
  7/9 Universitetskaya nab., Saint Petersburg 199034, Russia%
}
\end{tabular}%
}%
\hspace*{-\tabcolsep}%
}

\icompfinalcopy

\begin{document}

\maketitle
\fancyhead{}\renewcommand{\headrulewidth}{0pt}

\begin{abstract}
Attribution for a time-series forecaster should answer two questions that a
feature-level scalar cannot: when in the input window a feature acts, and how
much its influence changes across the forecast horizon. Most existing Shapley
explainers treat each feature as one player and therefore collapse both axes.
We propose \textit{TempShap}, a retrospective, loss-based Shapley framework whose
players are the individual input cells $(t,i)$. Its attribution tensor yields temporal-localization and horizon-impact profiles. We validate these against planted temporal structures,
a tractable exact cell game, and a closed-form attribution oracle for a real
linear forecaster. Permutation sharing estimates the cell-level game at localized perturbation
cost. Under redundant inputs, one-at-a-time scores under-credit substitutable
information, whereas the signed Shapley decomposition preserves total value and
yields more reliable held-out deletion rankings
under mean masking. A planted counterexample isolates the redundancy mechanism.
Experiments show a deletion-ranking advantage across Shapley estimators,
forecasting benchmarks, and model families. \textit{TempShap} therefore makes
fine-grained Shapley attribution both practical and operationally useful.
\end{abstract}

\section{Introduction}
\label{sec:intro}

Two questions a practitioner cannot avoid when interpreting a
multivariate forecaster: \emph{when} in the input window does a feature
exert its influence, and \emph{how much} does that influence change
across the prediction horizon. Yet most existing Shapley explainers
(\textit{SHAP} \citep{lundberg2017unified}, \textit{KernelSHAP}, \textit{FastSHAP}
\citep{jethani2022fastshap}, \textit{Leverage SHAP} \citep{musco2025leverage},
\textit{ShapG} \citep{zhao2025shapg}) treat each feature as one player, returning
one scalar per input feature. This collapses the input-lag and the whole
output cell (forecast horizon and target) to that scalar. The standard
evaluation does the same:
estimators are ranked by a progressive feature-removal score
\citep{petsiuk2018rise}, a single curve that again collapses the input-lag
and horizon axes. \textit{ShapTime} \citep{zhang2024shaptime} explains temporal
importance to guide forecasting improvements. Other time-series extensions push
toward $(\text{feature},\text{timestep})$ space by \emph{grouping}
(\textit{TimeSHAP} \citep{bento2021timeshap}, \textit{WindowSHAP}
\citep{nayebi2023windowshap}, \textit{ShaTS} \citep{franco2025shats}); finer-grained
approaches use cell-level Shapley perturbations (\textit{TimeSHAP}) or localized
per-position sensitivities (\textit{PAX-TS}), but neither provides a \emph{uniform
loss-based} per-(sample, input-lag, input-feature, horizon, target) Shapley
tensor from which temporal profiles are read; that tensor is this paper's
contribution.

Our contributions are:
\begin{itemize}
  \item \textit{A cell-level, loss-based Shapley framework}
  (\Cref{sec:method}) whose attribution tensor yields temporal-localization,
  horizon-impact, and rank-settling profiles (\Cref{fig:schematic}).
  \item \textit{Validation and robustness of this framework}
  (\Cref{sec:results-synth,app:sensitivity}): recovery of planted signals,
  exact-cell and linear-model oracle agreement, baseline-robust localization,
  and no consistent recency ordering across architectures.
  \item \textit{A signed, efficiency-respecting advantage under redundancy}
  (\Cref{sec:results-efficiency}): accounting for substitutable inputs
  improves held-out cell-deletion rankings across estimators
  under mean masking.
  \item \textit{A practical, unbiased estimator}
  (\Cref{sec:results-competitor}): permutation sharing delivers axiomatic
  cell attribution at localized perturbation cost.
\end{itemize}

\section{Related work}
\label{sec:related}

\paragraph{Shapley attribution for tabular models.}
Permutation-based Monte Carlo estimation of the Shapley value predates
\textit{SHAP}: \citet{castro2009polynomial} give a polynomial-time estimator for
general cooperative games. In machine learning, Shapley sampling values
adapt permutation-based estimation to feature attribution
\citep{strumbelj2014explaining}. \citet{lundberg2017unified} subsequently
unified these and other additive attribution methods under \textit{SHAP} and
introduced \textit{KernelSHAP}. Later work improves Shapley estimation through
paired \textit{KernelSHAP} sampling \citep{covert2021kernelshap}, amortization
\citep{jethani2022fastshap}, leverage-score sampling
\citep{musco2025leverage}, and graph-neighborhood sampling
\citep{zhao2025shapg}.
The \texttt{shapr} package \citep{jullum2025shapr} estimates
\emph{conditional} (observational) Shapley values, in contrast to the
interventional (marginal) game used in our main analysis: the ``true to
the data'' versus ``true to the model'' distinction \citep{chen2020true}.
Finally, \citet{covert2020understanding} introduce \textit{SAGE}, a loss-based
global Shapley importance measure. We adopt the loss-based game while
retaining the sample, forecast-horizon, and target axes.

\paragraph{Shapley attribution for time series.}
\citet{bento2021timeshap} (\textit{TimeSHAP}) combine feature-, timestep-, and
cell-level (event$\times$feature) \textit{KernelSHAP} perturbations with sequence
pruning for recurrent classifiers; \citet{nayebi2023windowshap}
group adjacent timesteps into windows; \citet{franco2025shats} use
per-(feature, segment) groups, and \citet{kim2026gsshap}
(\textit{GroupSegment-SHAP}) make cross-variable groups over time segments the
players. \citet{raykar2023tsshap} (\textit{TsSHAP}) explain forecasters via \textit{TreeSHAP} on
an interpretable surrogate; \citet{kreuzer2025paxts} (\textit{PAX-TS}) give
model-agnostic multi-granular explanations through localized
perturbations. \citet{villani2022varshap} present closed-form \textit{SHAP} for
VARMAX-class models and \emph{Time Consistent SHAP}, instantiating the
dynamic-game time-consistency of \citet{petrosjan2003time} via
separately re-computed prefix sub-games. These deliver
horizon-specific, time-segment, or multi-granular explanations;
\textit{TempShap}'s contribution is the \emph{uniform loss-based per-output-cell
tensor} over $(t,i)$ players and the temporal profiles it makes
computable. \textit{Temporal-Decay Shapley} \citep{pang2026tds} is data
valuation: it scores training \emph{samples}, not (input-cell,
output-cell) pairs, and is orthogonal despite sounding related.

\paragraph{Non-Shapley attribution.}
\textit{LIME} \citep{ribeiro2016should}, \textit{Integrated Gradients}
\citep{sundararajan2017axiomatic}, and \textit{DeepLIFT} \citep{shrikumar2017learning} are general
attribution methods applicable to neural forecasters, but saliency fails
to reliably identify feature importance over time, conflating the time
and feature axes \citep{ismail2020benchmarking}. \textit{Dynamask}
\citep{crabbe2021explaining} is the closest non-Shapley analogue for
dense input-cell saliency, though it targets single-output prediction
rather than multi-step forecasting.

\paragraph{Faithfulness evaluation.}
Attribution quality is assessed with deletion-based faithfulness scores
such as deletion-AUC \citep{petsiuk2018rise,hooker2019roar} and the
weighted-slope deletion score \citep{zhao2025shapg}, together with sanity
checks \citep{adebayo2018sanity}.

\section{The TempShap framework}
\label{sec:method}

\Cref{fig:schematic} shows the TempShap workflow in three steps.
(a) We treat input cells as players, replace masked cells with their training
means, and use each output cell's negative squared error as the payoff.
(b) We estimate cell Shapley values and collect them in a tensor.
(c) We average their magnitudes to obtain temporal-localization and
horizon-impact profiles, then sum horizon impacts to rank features.

\begin{figure}[!htb]
\centering
\resizebox{\linewidth}{!}{%
\begin{tikzpicture}[
  present/.style={draw=gray!55, minimum size=0.5cm, inner sep=0pt, fill=blue!8},
  masked/.style={draw=gray!55, minimum size=0.5cm, inner sep=0pt, fill=black!45},
  lbl/.style={font=\scriptsize, align=center},
  ttl/.style={font=\small\bfseries},
  box/.style={draw, rounded corners, align=center, fill=black!3, font=\small},
  >={Stealth[length=5pt]},
]
  \foreach \ry in {0,1,2}{\foreach \cx in {0,...,5}{\node[present] at (\cx*0.5,-\ry*0.5) {};}}
  \foreach \cx in {0,...,5}{\node[masked] at (\cx*0.5,-0.5) {};}
  \draw[red!75!black,very thick,rounded corners=2pt] (-0.28,-0.24) rectangle (2.78,-0.76);
  \node[ttl] at (1.25,0.5) {Feature-level};
  \node[lbl] at (1.25,-1.5) {input lag $t\rightarrow$};
  \node[lbl,text width=3.8cm] at (1.25,-2.0)
    {player $=$ feature (all $T$ lags)\\ \textcolor{red!75!black}{\emph{which lag?} lost}};
  \draw[->,thick] (2.95,-0.5) -- (3.75,-0.5);
  \node[lbl] at (3.35,-0.12) {split into\\cells};
  \begin{scope}[shift={(4.2,0)}]
    \foreach \ry in {0,1,2}{\foreach \cx in {0,...,5}{\node[present] at (\cx*0.5,-\ry*0.5) {};}}
    \node[masked] at (1.5,-0.5) {};
    \draw[green!50!black,very thick,rounded corners=1pt] (1.24,-0.26) rectangle (1.76,-0.74);
    \node[ttl] at (1.25,0.5) {Cell-level {\footnotesize(TempShap)}};
    \node[lbl] at (1.25,-1.5) {input lag $t\rightarrow$};
    \node[lbl,text width=3.8cm] at (1.25,-2.0)
      {player $=$ cell $(t,i)$\\ \textcolor{green!50!black}{pinpoints lag $\Rightarrow$ \emph{when}}};
  \end{scope}
  \draw[->,thick] (6.95,-0.5) -- (7.5,-0.5);
  \node[box,anchor=west] (game) at (7.55,-0.5)
    {mask $X|_S$, run $f_\theta$\\$v(S)=-(Y-\hat Y|_S)^2$};
  \begin{scope}[shift={(11.5,-1.25)}]
    \fill[blue!8] (0,0) rectangle (2.0,1.5);
    \draw (0,0) rectangle (2.0,1.5);
    \fill[blue!14] (2.0,0)--(2.5,0.5)--(2.5,2.0)--(2.0,1.5)--cycle;
    \draw (0,1.5)--(0.5,2.0)--(2.5,2.0)--(2.0,1.5);
    \draw (2.0,0)--(2.5,0.5);
    \foreach \x in {0.4,0.8,1.2,1.6} \draw[gray!45] (\x,0)--(\x,1.5);
    \foreach \y in {0.5,1.0} \draw[gray!45] (0,\y)--(2.0,\y);
    \node[lbl] at (1.0,-0.33) {lag $t\rightarrow$};
    \node[lbl,rotate=90] at (-0.32,0.75) {feature $i$};
    \node[lbl,rotate=45] at (2.72,0.62) {horizon $h$};
    \node[font=\small\bfseries] at (1.0,2.35) {$\Phi$};
  \end{scope}
  \draw[->,thick] (game.east) -- (11.5,-0.5);
  \begin{scope}[shift={(15.0,0.1)}]
    \draw[->] (0,0)--(1.6,0) node[right,font=\scriptsize]{$t$};
    \draw[->] (0,0)--(0,0.8);
    \draw[thick,blue!60!black] plot[smooth]
      coordinates {(0.1,0.1)(0.45,0.28)(0.8,0.7)(1.02,0.8)(1.28,0.46)(1.5,0.2)};
    \draw[dashed,red!70!black] (1.0,0)--(1.0,0.8) node[above,font=\tiny]{$\bar t_i$};
    \node[lbl,anchor=west] at (1.75,0.4) {\textbf{when}: $g_i(t)$};
  \end{scope}
  \begin{scope}[shift={(15.0,-1.15)}]
    \draw[->] (0,0)--(1.6,0) node[right,font=\scriptsize]{$h$};
    \draw[->] (0,0)--(0,0.8);
    \draw[thick,green!45!black] plot[smooth]
      coordinates {(0.1,0.75)(0.45,0.54)(0.8,0.4)(1.15,0.28)(1.5,0.18)};
    \node[lbl,anchor=west] at (1.75,0.4) {\textbf{how much}: $D_i(h)$};
  \end{scope}
  \draw[->,thick,gray!70] (14.1,-0.3) to[out=25,in=180]
    node[pos=0.5,above,font=\tiny]{avg $n,h,k$} (14.95,0.3);
  \draw[->,thick,gray!70] (14.1,-0.85) to[out=-25,in=180]
    node[pos=0.5,below,font=\tiny]{avg $t,n,k$} (14.95,-0.95);
\end{tikzpicture}}
\caption{\textit{TempShap} at a glance. \emph{(left)} Players are the input
\emph{cells} $(t,i)$, not whole feature columns (which mask all $T$ lags
at once and lose \emph{which} lag matters); a per-output-cell loss game
$v(S)=-(Y-\hat Y|_S)^2$ yields \emph{(right)} the tensor
$\Phi\in\mathbb{R}^{N\times H\times K\times T\times M}$, whose profiles
(averages of $|\Phi|$ over the other axes) give $g_i(t)$ (\emph{when}) and
$D_i(h)$ (\emph{how much}).}
\label{fig:schematic}
\end{figure}

\subsection{Setting and cell-level masking}
\label{sec:method-notation}

Let $X\in\mathbb{R}^{N\times T\times M}$ be $N$ input windows ($T$ lags,
$M$ features), $Y\in\mathbb{R}^{N\times H\times K}$ the targets ($H$ forecast steps, $K$ target channels), and
$\hat Y=f_\theta(X)$ with
$f_\theta:\mathbb{R}^{T\times M}\to\mathbb{R}^{H\times K}$. We index samples
by $n$, input lags by $t$, forecast steps by $h$, target channels by $k$,
and input features by $i$. The players are the $T\cdot M$ input
\emph{cells}: for a coalition
$S\subseteq\mathcal{P}:=\{(t,i)\}$, the masked input $X|_S$ keeps
$X_{n,t,i}$ for $(t,i)\in S$ and sets $b_i$ otherwise, with
$\hat Y|_S:=f_\theta(X|_S)$, where the baseline $b_i$ is the per-feature
\emph{training mean}, the standard interventional/marginal choice
($b_i\approx0$ on standardized inputs). Plugging in the mean approximates the interventional
loss $\mathbb{E}_{X'}\!\left[\,\ell\bigl(Y, f(x_S,
X'_{-S})\bigr)\right]$ \citep{chen2020true,covert2020understanding}. The
two agree on our data, so we use the mean throughout;
\Cref{app:sensitivity} verifies this against a Monte-Carlo
marginal expectation and conditional imputers. Masking a single cell
replaces feature $i$ at lag $t$ alone; this per-cell mask, not one
broadcast over all $T$ lags, is
what makes the input-time axis identifiable.

\subsection{Per-cell characteristic function and Shapley values}
\label{sec:method-charfn}

The characteristic function is the per-output-cell negative squared
error
\begin{equation}
v_{n,h,k}(S)=-\bigl(Y_{n,h,k}-\hat Y_{n,h,k}|_S\bigr)^2,
\label{eq:charfn}
\end{equation}
a \emph{loss-based} game in the sense of \textit{SAGE}
\citep{covert2020understanding}, structured across the $(n,h,k)$ output
axes. Because $v$ uses the realized $Y$, \textit{TempShap} is a
\emph{retrospective error attribution}, not an online
``why-this-prediction'' explanation (a prediction-based variant
$v(S)=\hat Y(X|_S)$ is a robustness check, \Cref{app:detail}). The
Shapley value of cell $(t,i)$ is the standard weighted marginal average
\begin{equation}
\phi_{n,h,k}(t,i)=\!\!\sum_{S\subseteq\mathcal{P}\setminus\{(t,i)\}}\!\!
\frac{|S|!\,(|\mathcal{P}|-|S|-1)!}{|\mathcal{P}|!}
\bigl(v_{n,h,k}(S\cup\{(t,i)\})-v_{n,h,k}(S)\bigr),
\label{eq:shapley}
\end{equation}
giving $\Phi\in\mathbb{R}^{N\times H\times K\times T\times M}$ with
per-output efficiency
$\sum_{(t,i)}\phi_{n,h,k}(t,i)=v_{n,h,k}(\mathcal{P})-v_{n,h,k}(\varnothing)$.
Because $v$ is the negative error, a \emph{positive} $\phi_{n,h,k}(t,i)$
means retaining that cell reduces the error relative to the all-masked
reference, and a \emph{negative} one means it increases the error; this
sign is what the deletion experiment of \Cref{sec:results-efficiency}
exploits. Marginalising our tensor over $t$ does \emph{not} recover the
$M$-feature time-broadcast game, whose coalitions force all $T$ cells of
a feature in or out together; the two coalition spaces differ.

\subsection{Estimators and cost}
\label{sec:method-cost}

Full enumeration requires $2^{TM}$ coalitions ($2^{336}$ for Jena),
so we estimate the cell game by sampling.
Our default estimator, \textit{cell-pShapley}, starts from
the fully masked input and restores cells one at a time in a uniformly
random order. Each step records the change in negative squared error for
every output cell. For brevity, we omit the output indices $(n,h,k)$ and
write $c=(t,i)$ for an input cell. Let $\pi$ order $\mathcal{P}$, with
$\pi(c)\in\{1,\ldots,|\mathcal{P}|\}$ the position of $c$.
The cells restored before $c$ are
$S_\pi(c)=\{c'\in\mathcal{P}:\pi(c')<\pi(c)\}$, so its marginal contribution is
\begin{equation}
\Delta_c(\pi)=v\bigl(S_\pi(c)\cup\{c\}\bigr)-v\bigl(S_\pi(c)\bigr).
\label{eq:marginal}
\end{equation}
We form $m_{\pi}$ \textit{antithetic pairs}, indexed by
$r=1,\ldots,m_{\pi}$. For each pair, we sample a base permutation $\pi_r$
independently and uniformly and traverse both $\pi_r$ and its reverse,
$\bar\pi_r(c)=|\mathcal{P}|+1-\pi_r(c)$, starting from the fully masked input
each time. Averaging their marginal contributions gives
\begin{equation}
\hat\phi_c=\frac{1}{m_{\pi}}\sum_{r=1}^{m_{\pi}}\bar\Delta_c(\pi_r),
\qquad
\bar\Delta_c(\pi_r)=\tfrac12\bigl[\Delta_c(\pi_r)+\Delta_c(\bar\pi_r)\bigr].
\label{eq:estimator}
\end{equation}
For each permutation, we evaluate the fully masked input and then
re-evaluate after restoring each cell. These $TM+1$ evaluations give
one marginal contribution per cell \citep{castro2009polynomial}.
\Cref{sec:method-properties} establishes its
guarantees and analyzes pairing.
\textit{cell-ShapG}
instead enumerates all sub-coalitions of each cell's bounded graph
neighborhood \citep{zhao2025shapg} (built once on the training split
\citep{zhao2025complete}); the restriction is \emph{structural}, so it
targets the graph-\emph{restricted} game and is \emph{biased} for the full
cell game, a comparison baseline only (\Cref{sec:results-competitor}).

\emph{Cost} (model-forwards; one returns the whole $H\times K$ output):
\textit{cell-pShapley} $O(N\,TM\,m_{\pi})$, linear in the budget; \textit{cell-ShapG}
$O(N\,TM\,2^{m_G})$, exponential in the neighborhood cap $m_G$; \textit{WindowSHAP}
$O(N\,2^{C})$ over its $C{=}T/W$ time chunks; \textit{PAX-TS} $O(N\,TM\,S)$ for $S$
perturbation scales (no coalitions); \textit{GS-SHAP} evaluates few group coalitions
but is dominated by its grouping step. Only \textit{cell-pShapley} is
cell-resolved, unbiased, and linear in its budget.

\subsection{Properties of cell-pShapley}
\label{sec:method-properties}

Fix the deterministic model, input, target, and masking baseline. For each output
cell $(n,h,k)$ separately, suppress these indices and draw the base permutations
$\pi_r$ independently and uniformly, pairing each with its reverse.
All expectations and variances below concern only this permutation randomness.

\begin{proposition}[Unbiasedness]
\label{prop:unbiasedness}
For every $m_{\pi}\ge1$, $\mathbb{E}[\hat\phi_c]=\phi_c$.
\end{proposition}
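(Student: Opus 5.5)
The argument reduces to two facts. First, a single uniformly random permutation gives an unbiased marginal contribution. Second, reversing a uniform permutation yields another uniform permutation. Linearity of expectation over the antithetic average and over the $m_{\pi}$ pairs then finishes the proof. No independence across pairs and no independence within a pair (the two members are maximally dependent) is needed, because we only compute a first moment.

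\emph{Step 1: the single-permutation identity.} Let $\pi$ be uniform on the $|\mathcal{P}|!$ orderings of $\mathcal{P}$. The quantity $\Delta_c(\pi)$ in \eqref{eq:marginal} depends on $\pi$ only through the predecessor set $S_\pi(c)$. For a fixed $S\subseteq\mathcal{P}\setminus\{c\}$, the event $S_\pi(c)=S$ requires the members of $S$ to occupy the first $|S|$ positions in some order, $c$ to be at position $|S|+1$, and the remaining cells to follow in some order. There are $|S|!\,(|\mathcal{P}|-|S|-1)!$ such orderings, so
\[
\Pr\bigl(S_\pi(c)=S\bigr)=\frac{|S|!\,(|\mathcal{P}|-|S|-1)!}{|\mathcal{P}|!}.
\]
Summing $\Delta_c$ against these probabilities reproduces \eqref{eq:shapley} term by term, which gives $\mathbb{E}[\Delta_c(\pi)]=\phi_c$. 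This is the classical identity of \citet{castro2009polynomial}.

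\emph{Step 2: the reversed permutation is also uniform.} The map $\pi\mapsto\bar\pi$ with $\bar\pi(c)=|\mathcal{P}|+1-\pi(c)$ composes $\pi$ with the fixed reversal of positions. It is therefore a bijection of the symmetric group onto itself, and in fact an involution. It pushes the uniform distribution to the uniform distribution, so $\bar\pi_r$ is marginally uniform, and Step 1 gives $\mathbb{E}[\Delta_c(\bar\pi_r)]=\phi_c$. Note that the traversal of $\bar\pi_r$ starts again from the fully masked input, so $\Delta_c(\bar\pi_r)$ is exactly the marginal contribution \eqref{eq:marginal} evaluated at $\bar\pi_r$. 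Its predecessor set is $\mathcal{P}\setminus(S_{\pi_r}(c)\cup\{c\})$, but we do not need this fact for the mean.

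\emph{Step 3: combine by linearity.} We have
\[
\mathbb{E}\bigl[\bar\Delta_c(\pi_r)\bigr]=\tfrac12\bigl(\mathbb{E}[\Delta_c(\pi_r)]+\mathbb{E}[\Delta_c(\bar\pi_r)]\bigr)=\phi_c,
\]
and averaging over $r=1,\ldots,m_{\pi}$ in \eqref{eq:estimator} gives $\mathbb{E}[\hat\phi_c]=\phi_c$ for every $m_{\pi}\ge1$.

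The only point needing care is Step 2. One must make sure that the pairing does not bias the estimator, even though $\pi_r$ and $\bar\pi_r$ are strongly dependent. The argument above shows that only the marginal law of each member matters for the expectation, and that law is uniform in both cases.
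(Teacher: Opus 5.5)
Your proof is correct and follows essentially the same route as the paper: you count the $|S|!\,(|\mathcal{P}|-|S|-1)!$ orderings that place exactly $S$ before $c$ to get $\mathbb{E}[\Delta_c(\pi)]=\phi_c$, use that reversal is a bijection so $\bar\pi$ is uniform, and finish by linearity. Your explicit remark that only the marginal law of each member of a pair matters, so the strong dependence within a pair does not affect the expectation, is a useful clarification that the paper leaves implicit.
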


\begin{proof}
For a uniform permutation $\pi$ and any
$S\subseteq\mathcal{P}\setminus\{c\}$, exactly
$|S|!\,(|\mathcal{P}|-|S|-1)!$ permutations place precisely $S$ before $c$.
Consequently, $\Pr[S_\pi(c)=S]$ is the weight of $S$ in
\eqref{eq:shapley}, and $\mathbb{E}[\Delta_c(\pi)]=\phi_c$.
Reversal is a bijection on the permutation space, so $\bar\pi$ is uniform too.
Both $\bar\Delta_c$ and its average over pairs are therefore unbiased.
\end{proof}

\begin{proposition}[Efficiency]
\label{prop:efficiency}
For every $m_{\pi}\ge1$ and every collection of sampled permutation pairs,
$\sum_c\hat\phi_c=v(\mathcal{P})-v(\varnothing)$.
\end{proposition}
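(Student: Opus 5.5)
The plan is to prove the identity one permutation at a time, using a telescoping argument, and then pass it through the two averaging layers of \eqref{eq:estimator}. The result is deterministic, so no expectation is involved: it must hold for every realization of the sampled pairs.

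\emph{Step 1 (single permutation).} Fix any ordering $\pi$ of $\mathcal{P}$. Write $c_j$ for the cell in position $j$, so that $\pi(c_j)=j$, and set $S_j=\{c_1,\ldots,c_j\}$ with $S_0=\varnothing$. By definition $S_\pi(c_j)=S_{j-1}$, so \eqref{eq:marginal} gives $\Delta_{c_j}(\pi)=v(S_j)-v(S_{j-1})$. Since $c\mapsto\pi(c)$ is a bijection from $\mathcal{P}$ onto $\{1,\ldots,|\mathcal{P}|\}$, we can reindex the sum over cells as a sum over positions:
\[
\sum_{c\in\mathcal{P}}\Delta_c(\pi)=\sum_{j=1}^{|\mathcal{P}|}\bigl(v(S_j)-v(S_{j-1})\bigr)=v(\mathcal{P})-v(\varnothing).
\]
The sum telescopes because $S_{|\mathcal{P}|}=\mathcal{P}$. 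This step is where the real content lies, and the main care needed is to confirm that it uses no property of $\pi$ other than its being an ordering. In particular, it applies equally to the reversed ordering $\bar\pi_r$, which is also a bijection onto $\{1,\ldots,|\mathcal{P}|\}$.

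\emph{Step 2 (antithetic pair).} Applying Step 1 to both $\pi_r$ and $\bar\pi_r$ and using linearity gives
\[
\sum_c\bar\Delta_c(\pi_r)=\tfrac12\bigl[\bigl(v(\mathcal{P})-v(\varnothing)\bigr)+\bigl(v(\mathcal{P})-v(\varnothing)\bigr)\bigr]=v(\mathcal{P})-v(\varnothing).
\]

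\emph{Step 3 (average over pairs).} Exchange the finite sums over $c$ and $r$ in \eqref{eq:estimator}:
\[
\sum_c\hat\phi_c=\frac{1}{m_\pi}\sum_{r=1}^{m_\pi}\sum_c\bar\Delta_c(\pi_r)=\frac{1}{m_\pi}\cdot m_\pi\bigl(v(\mathcal{P})-v(\varnothing)\bigr)=v(\mathcal{P})-v(\varnothing).
\]
This holds for every $m_\pi\ge1$ and every sampled collection of pairs.

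Finally, I will note that everything is done for a fixed output cell $(n,h,k)$, since the indices are suppressed. The identity therefore holds separately for each output cell, which matches the per-output efficiency of the exact values stated after \eqref{eq:shapley}.
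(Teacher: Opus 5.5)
Your proof is correct and follows essentially the same route as the paper: it applies the telescoping identity along a single ordering to each forward and reversed permutation, then averages. You simply make explicit the bijective reindexing from cells to positions and the two averaging layers (within each antithetic pair, then over the $m_{\pi}$ pairs), which the paper compresses into one sentence.
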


\begin{proof}
Order the cells as $c_1,\ldots,c_{|\mathcal{P}|}$ along any permutation $\pi$.
Averaging the following telescoping identity over the sampled forward and
reverse permutations proves the claim:
\begin{equation}
\sum_{j=1}^{|\mathcal{P}|}\Delta_{c_j}(\pi)
=\sum_{j=1}^{|\mathcal{P}|}
\left[v(\{c_1,\ldots,c_j\})-v(\{c_1,\ldots,c_{j-1}\})\right]
=v(\mathcal{P})-v(\varnothing).\qedhere
\end{equation}
\end{proof}

This property holds without an additional constraint; other estimators, such as
constrained \textit{KernelSHAP} \citep{covert2021kernelshap}, can also enforce
finite-sample efficiency.

\begin{proposition}[Variance of the paired estimator]
\label{prop:variance}
Let
$\sigma_c^2=\operatorname{Var}_\pi(\bar\Delta_c(\pi))$. Then
\begin{equation}
\operatorname{Var}(\hat\phi_c)=\frac{\sigma_c^2}{m_{\pi}}
=\frac{\operatorname{Var}(\Delta_c(\pi))+
\operatorname{Cov}(\Delta_c(\pi),\Delta_c(\bar\pi))}{2m_{\pi}}.
\label{eq:paired-variance}
\end{equation}
Here $\sigma_c^2$ depends only on interactions of order three and above.
\end{proposition}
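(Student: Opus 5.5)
The plan is to prove the two identities in \eqref{eq:paired-variance} directly from the sampling design, and then to handle the claim about interaction order through the Harsanyi (M\"obius) decomposition of the game.

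\emph{Variance identities.} The base permutations $\pi_1,\ldots,\pi_{m_\pi}$ are drawn independently and uniformly. Each $\bar\Delta_c(\pi_r)$ is a fixed function of $\pi_r$ alone, so the $\bar\Delta_c(\pi_r)$ are i.i.d. Hence $\operatorname{Var}(\hat\phi_c)=\sigma_c^2/m_\pi$.

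For the second equality, expand the variance of half a sum:
\[
\operatorname{Var}\bigl(\tfrac12[\Delta_c(\pi)+\Delta_c(\bar\pi)]\bigr)
=\tfrac14\bigl[\operatorname{Var}\Delta_c(\pi)+\operatorname{Var}\Delta_c(\bar\pi)+2\operatorname{Cov}(\Delta_c(\pi),\Delta_c(\bar\pi))\bigr].
\]
As in the proof of \Cref{prop:unbiasedness}, reversal is a bijection on permutations, so $\bar\pi$ is also uniform. Therefore $\operatorname{Var}\Delta_c(\bar\pi)=\operatorname{Var}\Delta_c(\pi)$, and the expression collapses to the stated formula. This part is routine.

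\emph{Interaction order.} Write $v(S)=\sum_{A\subseteq S}a(A)$ with Harsanyi dividends $a(A)$, obtained by M\"obius inversion. Then
\[
\Delta_c(\pi)=\sum_{A\ni c}a(A)\,\mathbf 1\{A\setminus\{c\}\subseteq S_\pi(c)\}.
\]
Under reversal, the predecessors of $c$ in $\bar\pi$ are exactly its successors in $\pi$. So $\Delta_c(\bar\pi)$ is the same sum with the indicator that every element of $A\setminus\{c\}$ comes \emph{after} $c$ in $\pi$. Split by $|A|$:
\begin{itemize}
\item For $A=\{c\}$, both indicators equal $1$, giving the constant $a(\{c\})$.
\item For $A=\{c,d\}$, exactly one of ``$d$ before $c$'' and ``$d$ after $c$'' holds. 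The paired average therefore contributes the constant $a(\{c,d\})/2$ for every $\pi$.
\end{itemize}
Consequently
\[
\bar\Delta_c(\pi)=\text{const}+\tfrac12\sum_{A\ni c,\,|A|\ge3}a(A)\bigl[\mathbf 1\{A\setminus\{c\}\text{ all before }c\}+\mathbf 1\{A\setminus\{c\}\text{ all after }c\}\bigr].
\]
It follows that $\sigma_c^2$ is the variance of this random part. That variance is a function only of the dividends $a(A)$ with $|A|\ge3$, and it vanishes for games with interactions of order at most two.

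\emph{Main obstacle.} The only delicate step is stating precisely what ``depends only on'' means. I would phrase it as the displayed identity above, which implies that $\sigma_c^2=0$ whenever all dividends of order three and higher vanish. I would also check that the bracket is genuinely random once $|A|\ge3$: it equals $1$ exactly when $c$ is first or last among $A$ in $\pi$, an event of probability $2/|A|<1$. This confirms that third- and higher-order terms do not cancel in general, so the characterization is not vacuous.
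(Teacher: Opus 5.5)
Your proof is correct and follows the paper's argument essentially step for step. Both use the M\"obius (Harsanyi) expansion of $\Delta_c$, note that singleton terms are constant and that pairwise terms average to exactly $a_{\{c,j\}}/2$ under reversal (so only $|A|\ge3$ terms remain random), and obtain the two variance identities from independence across pairs and the equal variances of the forward and reverse marginals; your extra observation that the bracket equals one with probability $2/|A|$ matches the paper's $\mathbb{E}[B_{A,c}]=1/|A|$.
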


\begin{proof}
Function $v(S)$ can be represented as follows:
$v(S)=\sum_{A\subseteq S}a_A$, where
$a_A=\sum_{B\subseteq A}(-1)^{|A|-|B|}v(B)$,
and \eqref{eq:marginal} takes the form
\begin{equation}
\Delta_c(\pi)=\sum_{A\ni c}a_A I_{A,c}(\pi),
\qquad I_{A,c}(\pi)=\mathbf{1}\{A\setminus\{c\}\subseteq S_\pi(c)\}.
\label{eq:marginal-mobius}
\end{equation}
A singleton contributes the constant $a_{\{c\}}$.
For $A=\{c,j\}$, exactly one of $\pi$ and $\bar\pi$ puts $j$ before $c$,
so $[I_{A,c}(\pi)+I_{A,c}(\bar\pi)]/2=1/2$ for every permutation $\pi$.
For larger $A$, define
$B_{A,c}(\pi)=[I_{A,c}(\pi)+I_{A,c}(\bar\pi)]/2$.
The cell $c$ is last in $A$ with probability $1/|A|$ in either uniform
ordering, so $\mathbb{E}[B_{A,c}]=1/|A|$. We obtain
\begin{equation}
\bar\Delta_c(\pi)
=a_{\{c\}}+\frac12\sum_{j\ne c}a_{\{c,j\}}
+\sum_{\substack{A\ni c\\|A|\ge3}}a_A B_{A,c}(\pi).
\end{equation}
Only the last sum is random, and its variance is
\begin{equation}
\sigma_c^2
=\sum_{\substack{A\ni c,\ |A|\ge3\\D\ni c,\ |D|\ge3}}
 a_A a_D\operatorname{Cov}_\pi(B_{A,c},B_{D,c}).
\label{eq:interaction-variance}
\end{equation}
This expression makes the dependence on interaction strengths,
signs, and the joint ordering of their cells explicit. Independence across the
$m_{\pi}$ pairs gives $\operatorname{Var}(\hat\phi_c)=\sigma_c^2/m_{\pi}$.
Since the forward and reverse marginals have the same variance, expanding the
variance of their average gives \eqref{eq:paired-variance}. Unbiasedness then
makes RMSE equal to $\sigma_c/\sqrt{m_{\pi}}$.
\end{proof}

At equal cost, $2m_{\pi}$ independent permutations have the variance in
\eqref{eq:paired-variance} without the covariance term, so pairing helps when
that covariance is negative.
Pairing can also reduce variance due to higher-order interactions.
For example, in the pure three-player interaction game
$v(S)=\mathbf{1}\{\{1,2,3\}\subseteq S\}$, a cell's unpaired marginal is one
with probability $1/3$ and zero otherwise. Averaging two independent permutations
has variance $1/9$. Under reversal, its paired marginal is $1/2$ when that cell
is first or last and zero when it is in the middle, giving variance $1/18$.
Thus a variance reduction does not imply the presence of second-order terms.

For games with only single-cell and pairwise effects, one
permutation and its reverse are enough to obtain exact Shapley values
\citep[Proposition~3.1]{mayer2025paired}.
The following corollary applies this result to our squared-loss game.

\begin{corollary}[Exactness for affine forecasters]
\label{cor:affine-exactness}
For an affine forecaster under fixed-baseline masking and squared loss,
averaging the marginal contributions from any permutation and its reverse
gives the exact cell Shapley values.
\end{corollary}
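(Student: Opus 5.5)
The plan is to show that, for an affine forecaster, each per-output-cell game $v_{n,h,k}$ has Möbius coefficients $a_A$ that vanish for every $|A|\ge3$. Then the expansion in the proof of \Cref{prop:variance} gives exactness directly. Fix $(n,h,k)$ and write the forecaster as $\hat Y_{n,h,k}=\beta_0+\sum_{c\in\mathcal{P}}\beta_c x_c$, with $\beta_0$ and $\beta_c$ depending on $(h,k)$.

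Under fixed-baseline masking, $x_c$ equals $X_{n,c}$ if $c\in S$ and $b_i$ otherwise. Hence
\begin{equation*}
\hat Y|_S=\mu+\sum_{c\in S}\delta_c,
\qquad
\mu=\beta_0+\sum_{c}\beta_c b_{i(c)},
\qquad
\delta_c=\beta_c\bigl(X_{n,c}-b_{i(c)}\bigr).
\end{equation*}
Set $r=Y_{n,h,k}-\mu$, the residual of the all-masked prediction. Then
\begin{equation*}
v(S)=-\Bigl(r-\sum_{c\in S}\delta_c\Bigr)^2
=-r^2+2r\sum_{c\in S}\delta_c-\sum_{c\in S}\delta_c^2-2\sum_{\{c,j\}\subseteq S}\delta_c\delta_j .
\end{equation*}
This is a set function that is a sum of a constant, singleton terms and pairwise terms. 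By uniqueness of the Möbius representation $v(S)=\sum_{A\subseteq S}a_A$, the coefficients are
\begin{equation*}
a_\varnothing=-r^2,\qquad
a_{\{c\}}=2r\delta_c-\delta_c^2,\qquad
a_{\{c,j\}}=-2\delta_c\delta_j,\qquad
a_A=0 \text{ for } |A|\ge3.
\end{equation*}

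Next I plug these into the decomposition of $\bar\Delta_c(\pi)$ from the proof of \Cref{prop:variance}. The sum over $|A|\ge3$ vanishes identically, not merely in expectation. This leaves
\begin{equation*}
\bar\Delta_c(\pi)=a_{\{c\}}+\tfrac12\sum_{j\ne c}a_{\{c,j\}}
\end{equation*}
for every permutation $\pi$. That step uses the deterministic fact, shown there, that exactly one of $\pi$ and $\bar\pi$ puts $j$ before $c$. The right-hand side is a constant, so it equals its expectation. By \Cref{prop:unbiasedness}, that expectation is $\phi_c$. Hence a single pair already returns $\phi_c$, and so does any average over pairs. Alternatively, I could invoke \citet[Proposition~3.1]{mayer2025paired} once the game is shown to have no interactions of order three or more.

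The only real content is the Möbius computation: squared loss of an affine function of the coalition indicators is a quadratic polynomial in those indicators. The step to handle carefully is that masking acts additively. Fixed-baseline replacement makes each cell's effect on $\hat Y$ the constant $\delta_c$, independent of the rest of $S$. The argument would fail for affine models under random or conditional imputation, because there $\delta_c$ could depend on $S$. For completeness I would also note that the conclusion holds per output cell, so it holds for the whole tensor $\Phi$.
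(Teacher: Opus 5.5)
Your proof is correct and follows the paper's argument. You write the masked affine prediction as $\mu+\sum_{c\in S}\delta_c$ (the paper's $\hat y|_{\varnothing}+\sum_{c\in S}q_c$), expand the squared loss into constant, singleton and pairwise terms, and then use the decomposition from the proof of \Cref{prop:variance} together with unbiasedness; the only difference is cosmetic, in that you read off that $\bar\Delta_c(\pi)$ is constant for every $\pi$, whereas the paper phrases the same fact as $\sigma_c^2=0$.
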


\begin{proof}
Fix an output cell with target $y$. Let $\hat y|_S$ be its prediction
when input cells in $S$ are retained and all others are masked.
In particular, $S=\varnothing$ masks all cells, while $S=\{c\}$ retains
only cell $c$. Thus $q_c=\hat y|_{\{c\}}-\hat y|_{\varnothing}$ is the
change in prediction from restoring cell $c$ alone.

For an affine forecaster, this change does not depend on which other
cells are retained. The changes therefore add up:
$\hat y|_S=\hat y|_{\varnothing}+\sum_{c\in S}q_c$.
Write $e_0=y-\hat y|_{\varnothing}$ for the error with all cells masked.
Expanding the squared loss gives
\begin{equation}
\begin{aligned}
v(S)&=-(y-\hat y|_S)^2=-\left(e_0-\sum_{c\in S}q_c\right)^2\\
&=-e_0^2+2e_0\sum_{c\in S}q_c-\sum_{c\in S}q_c^2
  -2\sum_{\substack{c,j\in S\\c<j}}q_cq_j\\
&=-e_0^2+\sum_{c\in S}(2e_0q_c-q_c^2)
  -2\sum_{\substack{c,j\in S\\c<j}}q_cq_j.
\end{aligned}
\label{eq:linear-loss-expansion}
\end{equation}
The constant term does not affect marginal contributions. Every remaining
term involves either one cell or a pair of cells.
\Eqref{eq:interaction-variance} gives $\sigma_c^2=0$. Since the paired estimator
is unbiased, even a single permutation pair yields the exact Shapley values.
\end{proof}

DLinear \citep{zeng2023transformers} adds linear forecasts of a moving-average trend and
its residual, so it is an affine forecaster covered by \Cref{cor:affine-exactness}.
\Cref{tab:budget} tests this prediction for the squared-loss game, comparing
$m_{\pi}$ antithetic pairs with $2m_{\pi}$ independent permutations at equal cost.
With pairing, DLinear agrees with the high-budget reference up to numerical
precision even at $m_{\pi}=1$; independent permutations retain sampling error.
For nonlinear forecasters, pairing gains vary by dataset and model and are
small for the Exchange-Rate Transformer. These gains reflect the covariance
in \eqref{eq:paired-variance}, rather than the fraction of second-order
interactions. The finite-reference error is discussed in \Cref{app:sensitivity}.

\subsection{Temporal profiles and rank settling}
\label{sec:method-diag}

Write $|\Phi|$ for the cell-wise absolute attribution. The two profiles
below are averages of $|\Phi|$ over the axes not shown.

The \textit{temporal-localization profile}
$g_i(t)=\mathbb{E}_{n,h,k}\,|\phi_{n,h,k}(t,i)|$ describes feature $i$'s
input-lag importance; its centroid $\bar t_i=\sum_t t\,g_i(t)/\sum_t g_i(t)$
answers \emph{when}, and $1-\mathcal{H}[g_i]/\log T$ how localized
($\mathcal{H}$ = entropy over $t$).

The \textit{horizon-impact profile}
$D_i(h)=\mathbb{E}_{t,n,k}\,|\phi_{n,h,k}(t,i)|$ measures the magnitude of
feature $i$'s contribution at horizon step $h$; its trend answers
\emph{how much} the impact changes across the forecast (it need not be
monotone; ``decay'' is reserved for the planted $\gamma^h$ case).

For \textit{rank settling}, let $\pi^{\le h}$ be the descending feature
ranking by cumulative magnitude up to $h$. The correlation $\tau_h=\mathrm{KendallTau}(\pi^{\le h},\pi^{\le H})$
and the settling time
$h_{\mathrm{settle}}(\theta)=\min\{h:\tau_s\ge\theta\ \forall s\ge h\}$
measure how fast the \emph{ranking} stabilizes. This is distinct from
$D_i(h)$, which measures \emph{magnitude}: a ranking can be stable from
$h=1$ while every magnitude decays (\Cref{sec:results-synth}, C3).

\section{Experimental protocol}
\label{sec:protocol}

We use Jena climate ($M=14$, hourly), ETTh1 \citep{zhou2021informer} ($M=7$,
hourly), and LSTNet Exchange-Rate \citep{lai2018modeling} ($M=8$, daily).
Input and forecast lengths are $T=H=24$, with unit-shift windows and
Informer 60/20/20 splits (Jena 70/20/10). We report
single-target ($K=1$) as primary, since it isolates the temporal axes; the
targets are temperature (Jena), oil temperature OT (ETTh1), and the AUS
rate (Exchange-Rate).
On each dataset we explain three \emph{distinct} architecture families,
trained with a \emph{single unified budget} (40 epochs, early-stopping
patience 6, Adam $10^{-3}$): an \textit{LSTM} (recurrent), a small
\textit{Transformer} encoder (attention, learned positional encoding), and
\textit{DLinear} (linear, series-decomposition).

\textit{TempShap} is estimator-agnostic. We evaluate \textit{cell-pShapley}
and \textit{cell-ShapG}, the two estimators in \Cref{sec:method-cost},
with \textit{TimeSHAP-c} and \textit{cell-Exact} as references.
\textit{cell-pShapley} is the deployed default ($m_{\pi}{=}32$ for
profiles, fewer for rankings) and \textit{cell-ShapG} ($m_G{=}5$) the
graph baseline. The references are \textit{TimeSHAP-c}, a loss-based
cell-level attribution on the \textit{TimeSHAP} player set \citep{bento2021timeshap}
solved by leverage-score sampling \citep{musco2025leverage}, used as an
independent agreement check, and \textit{cell-Exact}, an oracle for a
tractable small case.

For deletion faithfulness, we adapt the weighted-slope score of
\citet{zhao2025shapg} to the cell level: cells are deleted (masked to the
mean) in a method's descending importance order, in $10\%$ steps of the
$T\cdot M$ grid. Let $S_j$ contain the cells retained after step $j$ and
$X_j=X|_{S_j}$. Using the payoff in \eqref{eq:charfn}, define
$\mathrm{Perf}(X_j)=\mathbb{E}_{n,h,k}[v_{n,h,k}(S_j)]$, the negative squared
error averaged over evaluation windows and forecast outputs. Then
\begin{equation}
U_j=\frac{\mathrm{Perf}(X_j)-\mathrm{Perf}(X_\varnothing)}
{\mathrm{Perf}(X_{\mathcal{P}})-\mathrm{Perf}(X_\varnothing)},
\qquad
S=\sum_{j}\beta^{\,j-1}(U_{j-1}-U_j)\ \ (\beta=0.8),
\end{equation}
so $U{=}1$ denotes the full input and $U{=}0$ the all-masked input.
Larger $S$ indicates earlier performance drops and greater deletion faithfulness.
We do not clip
$U_j$, so an input worse than the all-masked reference gives $U_j<0$ and
$S>1$, flagging harmful surviving inputs. Every method is scored under the
\emph{identical} mean-mask evaluator, so $S$ compares rankings, not
maskers; rankings are estimated on $24$ test windows and $S$ on the
following disjoint $24$,\footnote{Here, ``disjoint'' means
that the ranking and evaluation sets contain different window instances, sampled
without replacement from the same shuffled test split. It does not exclude
overlap in their underlying input or target timestamps, which affects $1$--$3\%$
of window pairs. A strictly timestamp-disjoint control leaves the ranking
comparison of \Cref{sec:results-efficiency} unchanged.} repeated across
model-training seeds. Besides
\textit{cell-pShapley} we rank cells by \textit{PAX-TS} sensitivity, by
single-cell leave-one-out (\textit{cell-LOO},
$v(\mathcal{P})-v(\mathcal{P}\setminus\{(t,i)\})$), by \textit{cell-ShapG} and \textit{TimeSHAP-c}, and by the unsigned
\textit{cell-pShapley} ranking, which averages $|\phi|$ over ranking windows
and forecast outputs.

\section{Results}
\label{sec:results}

\subsection{Ground-truth validation}
\label{sec:results-synth}

\input{tables/synth_c1.tex}

\begin{figure}[!htb]
\centering
\includegraphics[width=\linewidth]{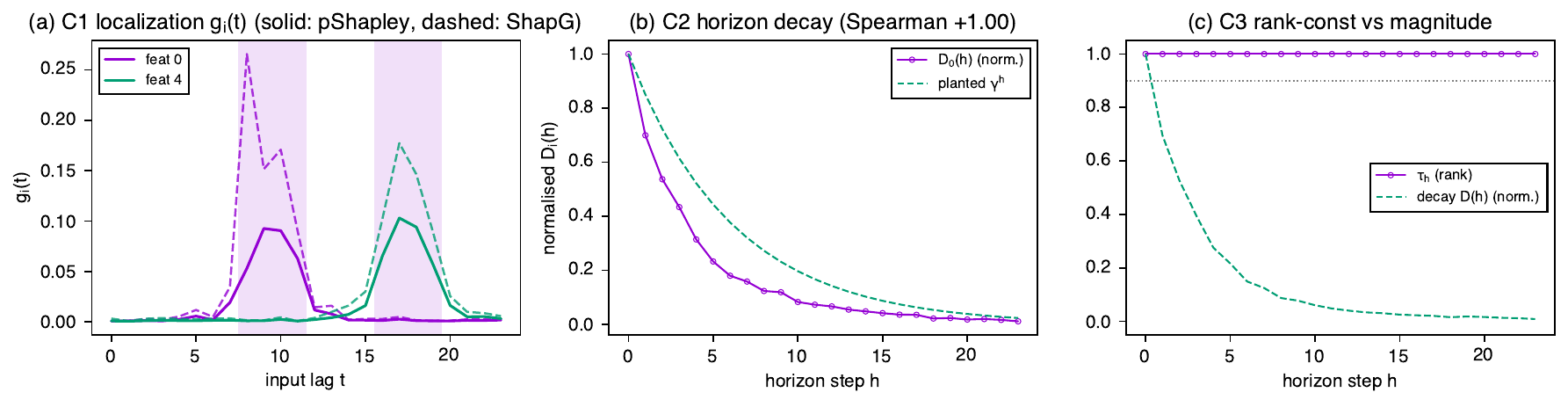}
\caption{Synthetic ground-truth validation. (a) C1: localization
profiles $g_i(t)$ peak at the planted active lags (shaded). (b) C2: the
horizon-impact profile $D_0(h)$ tracks the planted $\gamma^h$. (c) C3: the
ranking settles immediately ($\tau_h\approx1$) while the magnitude
$D(h)$ decays; the ranking and the magnitude are distinct.}
\label{fig:synth-when}
\end{figure}

We validate these profiles against three planted \emph{constructions}
(C1--C3; details in \Cref{app:detail}, summarized in
\Cref{fig:synth-when}).
\textit{C1 (localization).} With feature~0 predictive only at lags
$[8,12)$ and feature~4 only at $[16,20)$, all three cell estimators
recover both planted lags to within a lag and rank both signal features
top-2 (\Cref{tab:synth-c1}). \textit{C2
(decay)} plants a $\gamma^h$-decaying dependence ($\gamma=0.85$), and
\textit{C3 (rank-constant, magnitude-decaying)} scales four fixed-ranking
features by $\gamma^h$; the estimated profiles reproduce both patterns
(\Cref{fig:synth-when}b,c).

In the small exact-oracle experiment ($T=6$, $M=2$), the target is the
mean of feature~0 at lags 2 and 3 plus noise, giving a planted centroid of 2.5.
On this tractable case ($2^{12}$ coalitions),
\textit{cell-pShapley} matches the exact cell oracle's centroid and \textit{cell-ShapG} is
close (\Cref{tab:synth-c1}); the non-signal cells carry near-zero
attribution, so the oracle is read mainly through the centroid.

For the real-data analytic oracle, we use DLinear's prediction as the
payoff, $v(S)=\hat y|_S$, for a fixed output cell.
Let $J_{t,i}=\partial\hat y/\partial X_{t,i}$.
Since DLinear is affine, restoring cell $(t,i)$ from its baseline
always changes the prediction by the same amount, giving the exact
Shapley value $\phi_{t,i}=J_{t,i}(X_{t,i}-b_i)$.
\textit{cell-pShapley} recovers these values on all three real DLinear forecasters
(signed Pearson $1.00$, per-cell Kendall-$\tau$ $1.00$, $0$ lag centroid
error; \Cref{tab:linear-gt}); \textit{cell-ShapG} is close (Pearson $0.87$--$0.99$,
$\le 0.5$ lags). Here finite-budget exactness follows from constant
prediction marginals; for the quadratic squared-loss game, it follows
from antithetic pairing (\Cref{sec:method-properties}). This prediction-based
oracle validates the estimator; the planted experiments above validate
the loss-based headline game.

\subsection{Competitor comparison}
\label{sec:results-competitor}
Do existing time-series Shapley variants already answer \emph{when}?
We use a variant of C1: the target is the sum of feature~0's mean over
lags $[8,11)$ and feature~4's mean over lags $[16,19)$, plus noise.
\Cref{tab:competitor} compares the methods, including \textit{GroupSegment-SHAP} \citep{kim2026gsshap} and \textit{PAX-TS}
\citep{kreuzer2025paxts} at the authors' settings. Feature- and
chunk-level player sets \citep{nayebi2023windowshap} cannot separate the
two planted lags; only cell-level attribution recovers both, consistently
across three Shapley estimators and a per-cell permutation control
\citep{breiman2001random}. The per-feature profiles show the same picture
(\Cref{fig:competitor}).
Among cell-level methods, only \textit{cell-pShapley} is Shapley-axiomatic
\emph{and} runs at perturbation cost. \textit{PAX-TS}'s six-scale perturbation set
costs the same as $m_{\pi}{=}3$ antithetic pairs, and at that budget
\textit{cell-pShapley} localizes at least as sharply. \textit{cell-ShapG} needs $6.4\times$
more evaluations and carries the structural offset of
\Cref{sec:method-cost} (\Cref{fig:real-profiles}). \textit{GS-SHAP} is
the slowest despite the fewest coalitions: its HSIC (Hilbert--Schmidt
independence criterion) and MMD (maximum mean discrepancy) kernel
grouping, not the game, dominates its wall-clock.

\input{tables/competitor.tex}

\subsection{Redundancy and deletion faithfulness}
\label{sec:results-efficiency}

We next test whether redundancy accounting and signed loss
contributions yield stronger deletion rankings.

In the redundancy counterexample, features $0,1$ are equally-noisy copies of one latent signal driving the
target (six further features are noise); we ask each method what share of
the pair's \emph{joint} value
$\Delta_{\mathrm{pair}}=v(\mathcal{P})-v(\mathcal{P}\setminus\{0,1\})$ it
credits (\Cref{tab:redundancy}). Leave-one-out credits about half:
removing either copy is covered by the survivor, the classic blind spot of
one-at-a-time scores \mbox{\citep{covert2020understanding}}. In the high-correlation cases, \textit{cell-pShapley} credits
the pair approximately its joint value, while efficiency constrains the total
attribution across all cells. Once the redundancy is gone (planted correlation $0.20$) the two
accountings are indistinguishable, so the gap is redundancy-specific.
The same construction scored with feature-level Exact Shapley behaves
differently again (\Cref{app:detail}).

To evaluate deletion faithfulness, we delete cells in each method's importance order and record the
performance drop (\Cref{sec:protocol}, \Cref{tab:cellkpi},
\Cref{fig:cellkpi}). Under mean-mask deletion, the
signed \textit{cell-pShapley} ranking beats \textit{cell-LOO} on
every dataset-model case and \textit{PAX-TS} on $8$ of $9$; over the $27$ paired
runs, exact sign tests give $p<10^{-6}$ and $p<10^{-4}$.
\textit{cell-ShapG} and \textit{TimeSHAP-c} also beat \textit{PAX-TS} in
$24/27$ runs each, so
within this evaluator the advantage belongs to the
Shapley structure rather than to one estimator.
Deleting successive batches of ranked cells tests whether a ranking
leaves substitutes for deleted cells in place. The sign
matters too: the unsigned $|\phi|$ variant falls back to \textit{PAX-TS}
($14/27$). A signed ranking deletes the loss-reducing cells first, so the
loss-increasing ones survive and actively mislead the model midway
through the curve (\Cref{fig:cellkpi}). Where redundancy is rare, as on
ETTh1 with its dominant OT feature, all methods nearly tie.
This advantage does not persist under a
conditional-imputation evaluator (\Cref{app:onmanifold}).

\input{tables/redundancy.tex}

\input{tables/cellkpi.tex}

\begin{figure}[!htb]
\centering
\includegraphics[width=\linewidth]{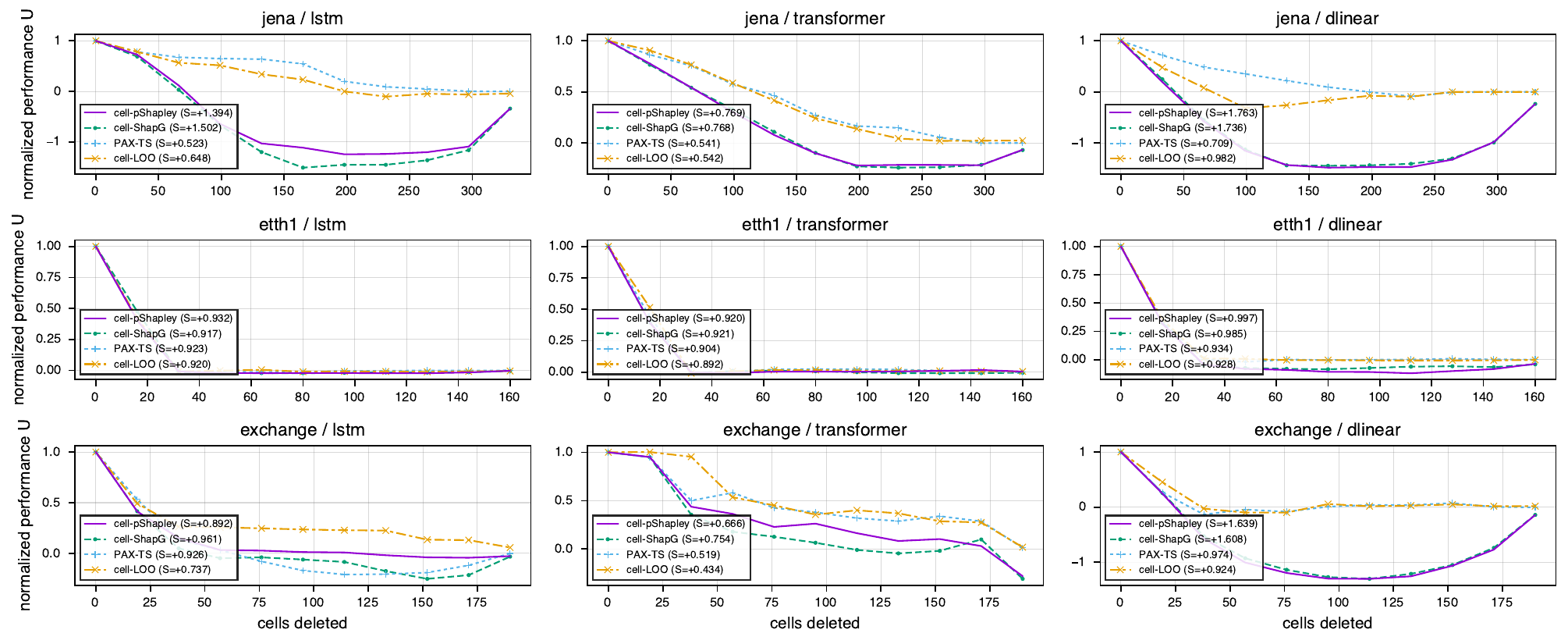}
\caption{Cell-level deletion curves for one of the three training seeds
averaged in \Cref{tab:cellkpi} (held-out windows; $U{=}1$ full model,
$U{=}0$ all-masked; legend gives $S$; \textit{cell-pShapley} at $m_{\pi}{=}16$). Deleting
in \textit{cell-pShapley} order drives $U$ below the all-masked baseline on the
redundant Jena panels while \textit{PAX-TS} and \textit{cell-LOO} plateau; the methods
nearly tie on ETTh1.}
\label{fig:cellkpi}
\end{figure}

\subsection{Real data}
\label{sec:results-real}
The profiles resolve the $(t,i,h)$ structure on every real forecaster
(\Cref{tab:real}; per-feature profiles in
\Cref{fig:real-profiles}). A single feature dominates across
families on two of the three datasets (OT on ETTh1, AUS on
Exchange-Rate).

There is no consistent recency ordering across architectures:
the most-recent family changes across datasets (\Cref{tab:real}). Most
$g_i(t)$ profiles are near uniform; their departures are dataset- and
model-specific, revealing structure that a single centroid cannot resolve.

The rank-settling $\tau_h$ distinguishes stable from evolving rankings
and separates cases that no scalar attribution distinguishes: some
rankings are fixed from $h=1$, others keep reordering well into the
horizon (\Cref{tab:real}, \Cref{fig:conv}).

\input{tables/real.tex}

Baseline, payoff and budget sensitivity controls are reported
in \Cref{app:sensitivity}.
At the reported sampling budget, the tested profiles show
little variation across independent permutation draws in eight of nine
dataset--model cases, with the Exchange-Rate Transformer as the exception
(\Cref{fig:profile-stability}).

\section{Conclusion}
\label{sec:conclusion}
\textit{TempShap} resolves input lag and forecast horizon through a
cell-level loss game, validated against planted ground truth and analytic
oracles. Permutation sharing makes estimation practical at localized
perturbation cost. Its signed Shapley rankings improve held-out deletion
faithfulness under mean masking, with gains reproduced
across estimators, supporting applications such as
pruning and masking.

There are some limitations.
\textit{TempShap} explains error after the fact, not a prediction online. The
redundancy mechanism is shown causally only on the synthetic
counterexample. Shapley rankings are not generally optimal for greedy deletion; the observed
advantage is empirical. We evaluate one model per family rather than multiple
tuned instances.

Three directions for future work can be proposed: (a) Explaining
this dependence on the deletion protocol and developing rankings suited to
conditional deletion remain open problems. (b) Multi-target ($K>1$) extensions must account for each target's own-history
contribution. (c) Larger $T\cdot M$ and further architectures (e.g.\ PatchTST,
TFT) would test scalability.

\bibliographystyle{icomp2026_conference}
\bibliography{icomp2026_conference}

\clearpage
\appendix
\section{Construction details and ablations}
\label{sec:appendix}
\label{app:detail}

\subsection{Setup and synthetic constructions}
\label{app:setup}

All timings are measured on one machine: an
NVIDIA RTX 4090 (24\,GB), a 32-thread Intel Core i9-13900F, and
$32$\,GB of RAM.

The synthetic constructions use $T=H=24$, $M=8$ in two correlated blocks (a shared latent per
block at each timestep, i.i.d.\ over $t$ so the planted lag is the unique
information location), $12\,000/2\,000/128$ train/val/test, one LSTM (32
units, 30 epochs). C1 plants feature~0 at lags $[8,12)$ and feature~4 at
$[16,20)$; C2 a single feature at $[8,12)$ with target $\propto\gamma^h$,
$\gamma=0.85$; C3 four features (coefficients $1,0.7,0.5,0.3$) at
$[8,12)$ all scaled by $\gamma^h$. The competitor comparison
(\Cref{tab:competitor}) reuses this construction with narrower bands.

\begin{figure}[!htb]
\centering
\includegraphics[width=\linewidth]{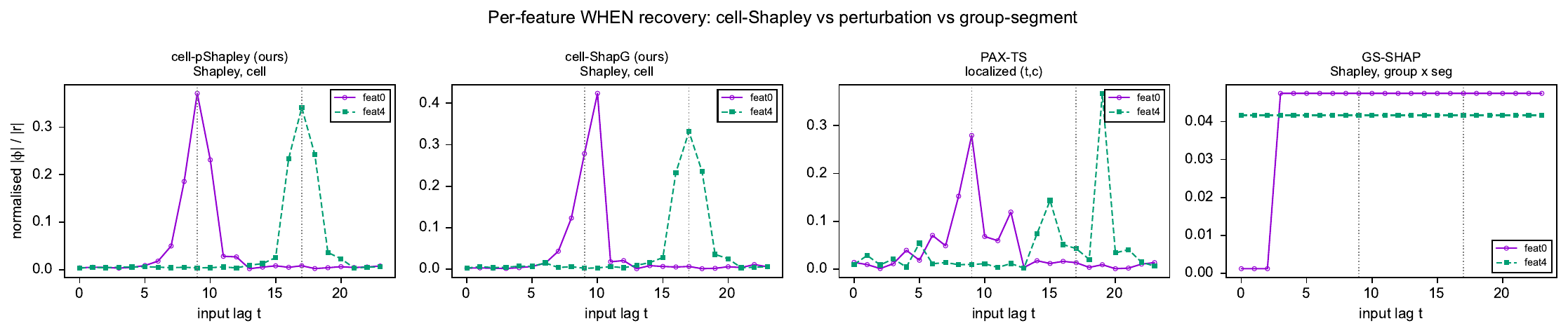}
\caption{Per-feature \emph{when} recovery on the planted band (dotted =
planted lags $9,17$). The cell estimators peak sharply at both lags;
PAX-TS recovers them with a more multi-peaked feat4 profile; GS-SHAP is
flat, since its groups cannot separate the features.}
\label{fig:competitor}
\end{figure}

\begin{figure}[!htb]
\centering
\includegraphics[width=\linewidth]{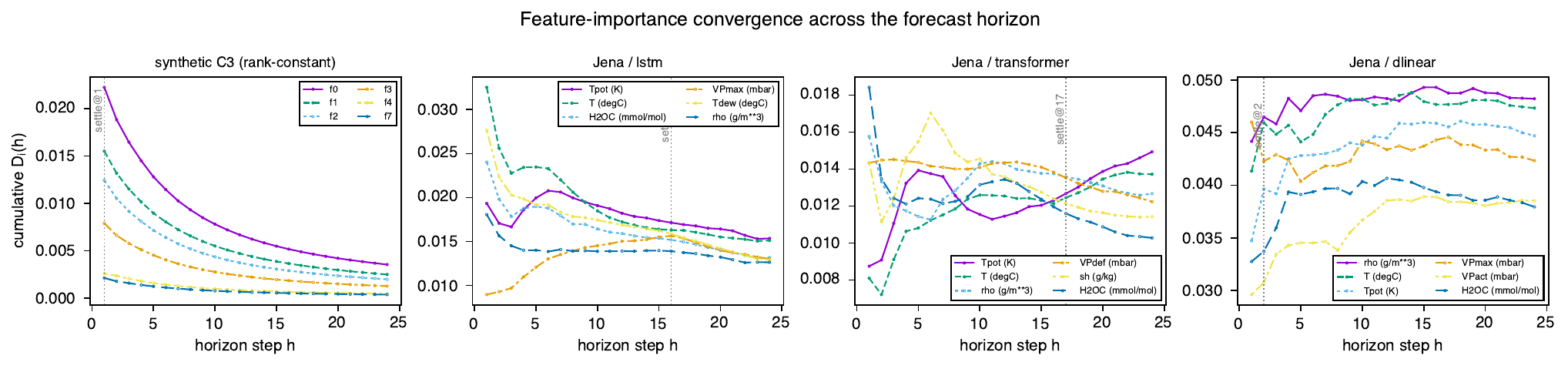}
\caption{Cumulative per-feature importance up to step $h$; the order in
which the curves settle is what $\tau_h$ measures. Synthetic C3 settles at
$h=1$ while its magnitudes decay; on Jena the settling horizon differs by
architecture.}
\label{fig:conv}
\end{figure}

\begin{figure}[!htb]
\centering
\includegraphics[width=\linewidth]{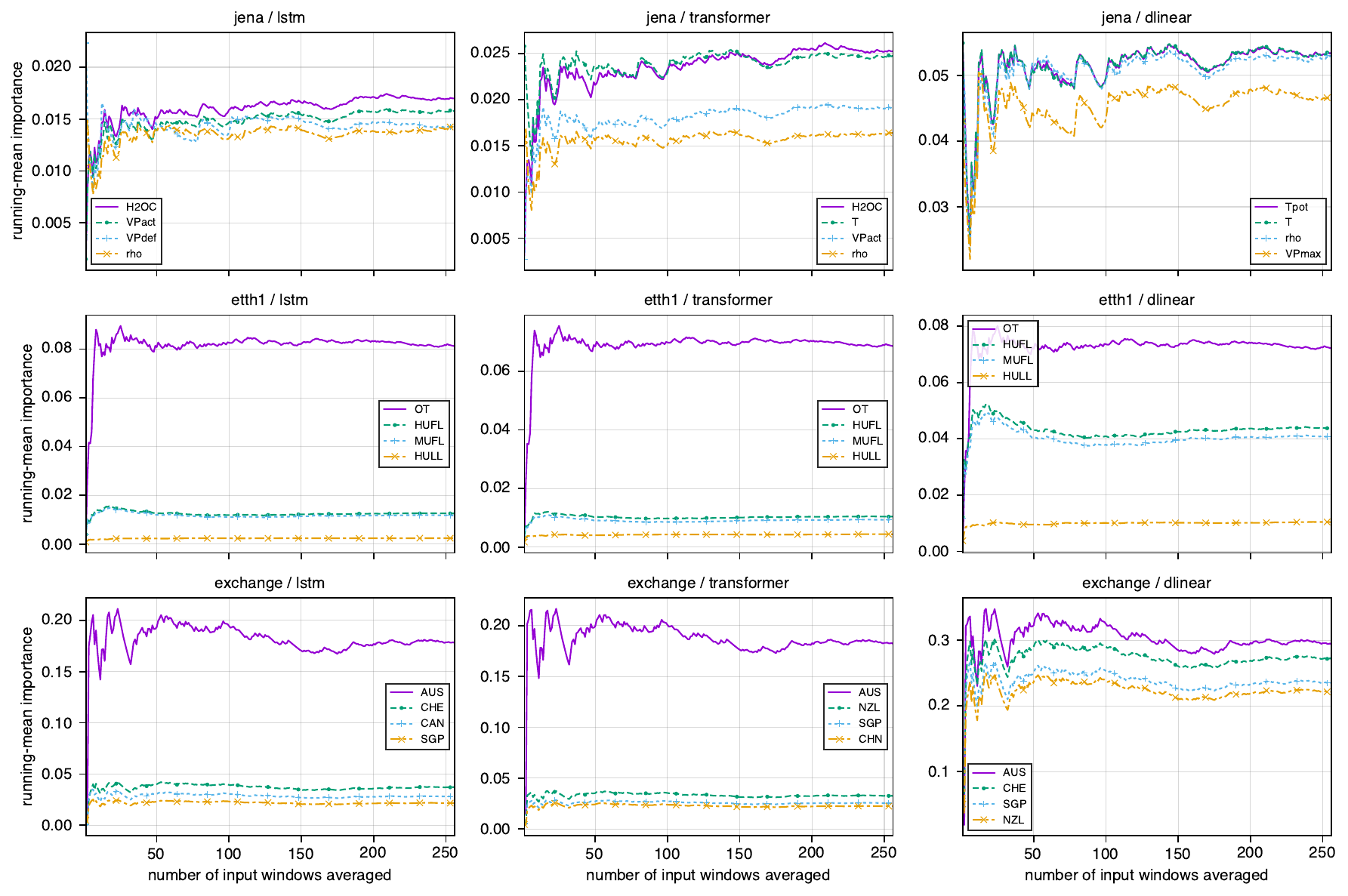}
\caption{Running-mean importance of the top-$4$ features against the
number of input windows averaged (cell-pShapley, real data, $N{=}256$).
The top-$4$ set settles within a few dozen windows, well before the
individual magnitudes stop drifting.}
\label{fig:sampleconv}
\end{figure}

\subsection{Real-data profiles and linear oracle}
\label{app:real-profiles}

\input{tables/linear_gt.tex}

\begin{figure}[!htb]
\centering
\includegraphics[width=\linewidth]{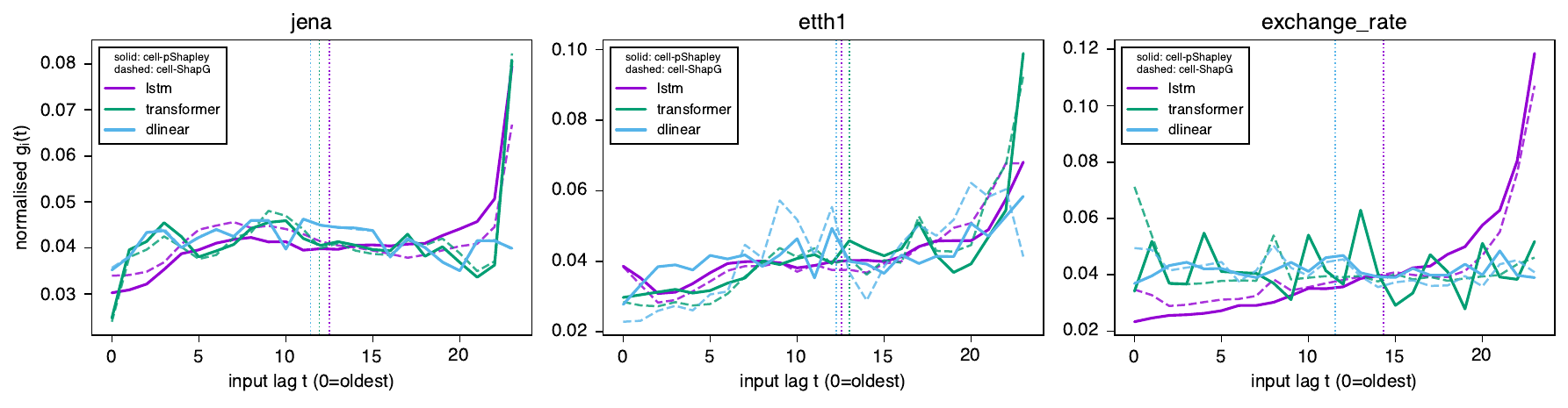}\\[0.6em]
\includegraphics[width=\linewidth]{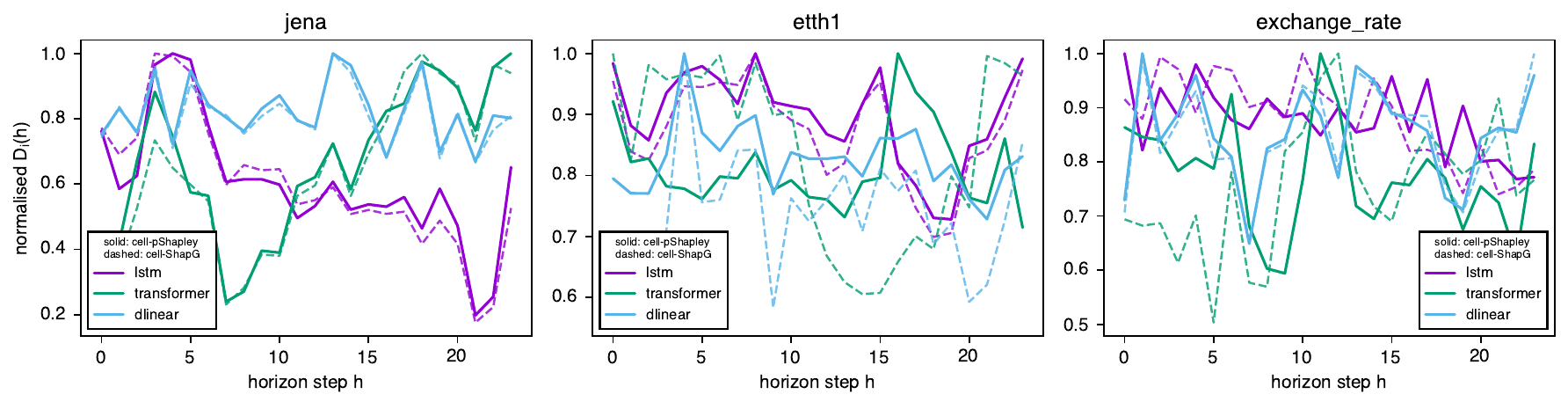}
\caption{Real-data profiles per architecture family, one panel per
dataset. \emph{(top)} \emph{when}: temporal influence $g(t)$, with solid
curves for cell-pShapley, dashed for cell-ShapG, and vertical dotted lines
at the centroid of each cell-pShapley profile. \emph{(bottom)}
\emph{how much}: horizon impact $D(h)$ of the top feature, which is not
monotone, so a feature's influence fluctuates across the horizon rather
than decaying. Both are normalized.}
\label{fig:real-profiles}
\end{figure}

\subsection{Feature-level Exact control}
\label{app:featlevel}

We repeated the redundancy experiment with a different player set: the $M$
whole features rather than the $T\cdot M$ cells, which makes the game
small enough to solve exactly ($2^8$ coalitions). This game gives the
redundant pair $347\pm124\%$ of its joint value, far more than the cell
game does, and splits that credit unevenly between the two copies
($1.6\pm1.3$) from one seed to the next. Two things differ: the coalitions
are no longer the same (\S\ref{sec:method-charfn}), and removing a
feature's entire history at once pushes the input far off the data
manifold when the copies are highly correlated. Efficiency fixes only the
grand total, never how it is divided among players, so how much of it a
group of players captures depends on which players the game has.

\subsection{Sensitivity controls}
\label{app:sensitivity}

\input{tables/sensitivity.tex}

\textit{Reference.} Replacing the mean with a Monte-Carlo marginal
expectation moves the when-centroid by at most $1.4$ lags (median $0.3$;
\Cref{tab:sensitivity}), so the mean plug-in tracks it closely. An
observational game (per-feature temporal Gaussian conditional imputer,
shapr-style \citep{jullum2025shapr,chen2020true}) agrees within $\sim\!1$
lag on $8/9$ cells; the exception, Exchange-Rate LSTM at $\sim\!3$ lags,
is the least-predictable dataset.

\textit{Payoff.} Swapping the error-reduction $v$ (\eqref{eq:charfn}) for
the prediction-based $v(S)=\hat Y|_S$ moves the centroid by $\le 0.3$
lags, and a global $R^2$ game by $\le 1.2$ lags. Which feature ranks first
is unchanged everywhere except in one cell under the prediction-based
game (\Cref{tab:sensitivity}). The profiles do not hinge on the choice of
$v$.

\textit{Budget convergence.} All reported profiles and
rankings are budget-converged at the $m_{\pi}$ used. We swept $m_{\pi}$ from $1$ to $64$ on all
$9$ cases against a reference computed at $128$ pairs
(\Cref{tab:budget}). Cell
values need the budget: they are still improving at $m_{\pi}{=}64$. The
when-centroid does not. On $8/9$ cases it is already within a quarter
of a lag at $m_{\pi}{=}1$. The Exchange-Rate Transformer is the
exception, slow on both. \Cref{fig:sampleconv} shows the same pattern
along the sample axis.

\textit{Sampling stability of the profiles.}
For each of the nine cases, we fix the model, explanation
windows and baseline, and repeat estimation with eight independent permutation
seeds. We assess the feature with the largest attribution mass in an independent
$m_{\pi}{=}128$ reference. At $m_{\pi}{=}32$, mean total-variation distance from
this reference is below $0.021$ for normalised $g_i(t)$ in eight cases, with mean
unnormalised mass differing by less than $1\%$; mean distance for normalised
$D_i(h)$ is at most $0.010$ in all nine. The three DLinear cases are exact at
every budget, as \Cref{cor:affine-exactness} requires. The
Exchange-Rate Transformer's $g_i(t)$ distance is $0.164$, so its fine temporal
structure remains uncertain
(\Cref{fig:profile-stability}).

\begin{figure}[!htb]
\centering
\includegraphics[width=\linewidth]{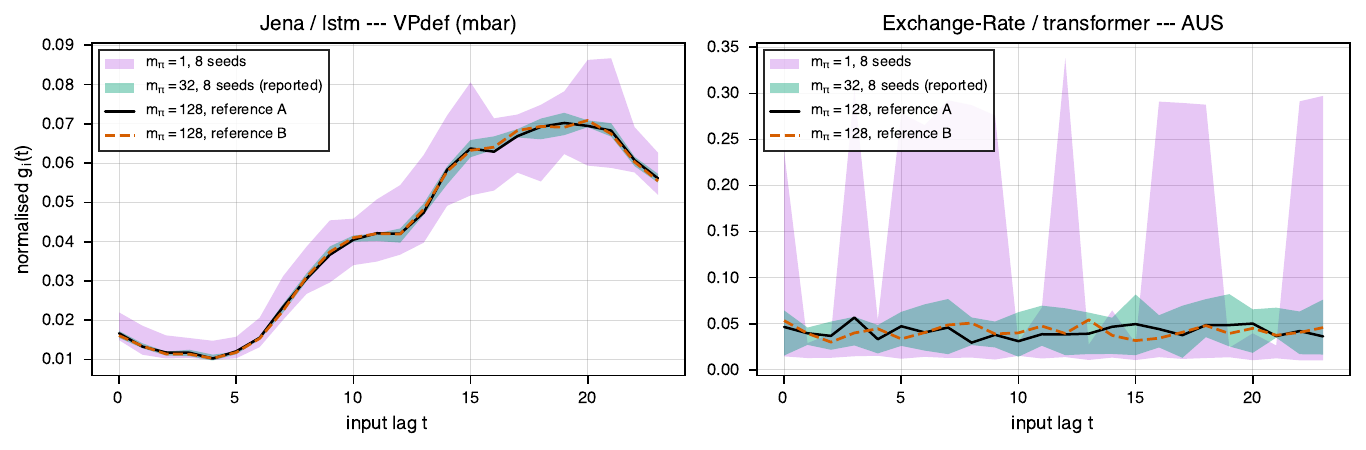}
\caption{Normalised top-feature profiles for two of the nine
cases: Jena LSTM and the Exchange-Rate Transformer. Coloured bands show the
pointwise min--max range across eight permutation seeds at $m_{\pi}{=}1$ and $32$.
The black solid and orange dashed lines show two independent $m_{\pi}{=}128$
references; distances are computed against the black curve.}
\label{fig:profile-stability}
\end{figure}
\textit{Finite-reference error.}
The paired/unpaired comparison in \Cref{tab:budget} is discussed in
\Cref{sec:method-properties}. The observed error decrease is consistent with
Monte-Carlo convergence, but the reference itself uses finitely many pairs.
For a scalar attribution estimate $\hat\phi_{c,m}$ and an independently sampled
reference $\hat\phi_{c,M_{\mathrm{ref}}}$ for the same game,
\begin{equation}
\mathbb{E}\!\left[
(\hat\phi_{c,m}-\hat\phi_{c,M_{\mathrm{ref}}})^2\right]
=\sigma_c^2\left(\frac{1}{m}+\frac{1}{M_{\mathrm{ref}}}\right),
\qquad M_{\mathrm{ref}}=128.
\label{eq:reference-error}
\end{equation}
The expectation includes both independent estimates. Thus error
against this reference need not follow a pure $m^{-1/2}$ rate, particularly as
$m$ approaches the reference budget.

\input{tables/budget.tex}

\subsection{On-manifold deletion}
\label{app:onmanifold}
Keeping rankings, windows and the deletion schedule
fixed, we replace mean masking with the conditional mean of a joint Gaussian
fitted to training windows (\Cref{tab:onmanifold}). Under this evaluator,
cell-pShapley beats PAX-TS on $3/27$ runs and cell-LOO on $11/27$; the unsigned
$|\phi|$ ranking achieves the highest median $S$.

Reconstruction checks (seed $2333$) give $R^2<0$
relative to per-cell training means on $22/72$ tested deletion sets for ETTh1,
versus $0/72$ each for Jena and Exchange-Rate. Excluding all nine ETTh1 runs
leaves cell-pShapley beating cell-LOO on $5/18$ runs and $|\phi|$ beating
PAX-TS on $16/18$.

\input{tables/onmanifold.tex}

\end{document}

%% file: math_commands.tex
\usepackage{amsmath,amsfonts,bm}

\def\eqref#1{equation~\ref{#1}}
\def\Eqref#1{Equation~\ref{#1}}

\def\1{\bm{1}}

\def\ry{{\textnormal{y}}}

\DeclareMathAlphabet{\mathsfit}{\encodingdefault}{\sfdefault}{m}{sl}
\SetMathAlphabet{\mathsfit}{bold}{\encodingdefault}{\sfdefault}{bx}{n}

%% file: tables/synth_c1.tex
\begin{table}[!htb]
\caption{Recovery of the planted signal. Centroids (in lags):
$(\bar t_0,\bar t_4)$ for C1; $\bar t_0$ for Cell-Exact.}
\label{tab:synth-c1}
\centering\small
\setlength{\tabcolsep}{7pt}
\begin{tabular}{llrl}
\toprule
Construction & Estimator & \emph{when} centroid & agreement \\
\midrule
C1 ($T{=}24$, $M{=}8$) & planted & $(9.5, 17.5)$ & --- \\
  & \textbf{cell-pShapley}   & $(9.7, 17.0)$ & both top-2 \\
  & cell-ShapG               & $(9.3, 16.8)$ & both top-2 \\
  & TimeSHAP-c               & $(9.7, 16.9)$ & both top-2 \\
\addlinespace
Cell-Exact ($T{=}6$, $M{=}2$) & exact oracle & $2.54$ & --- \\
  & \textbf{cell-pShapley}   & $2.54$ & $\tau=0.91$ \\
  & cell-ShapG               & $2.46$ & $\tau=0.55$ \\
\bottomrule
\end{tabular}
\end{table}

%% file: tables/competitor.tex
\begin{table}[!htb]
\caption{Localization and cost comparison. Top-2 recovery: features 0 and 4
occupy the top two ranks; N/A: no feature ranking; $^g$: derived from group-level
attributions. Centroids are in lags; windows count model-forward evaluations;
times use one GPU. Budgets: $m_{\pi}=3$ (cell-pShapley), $m_G=5$ (cell-ShapG).
All Shapley methods use the same mean baseline.}
\label{tab:competitor}
\centering\small
\setlength{\tabcolsep}{3pt}
\begin{tabular}{lcccccrr}
\toprule
Method & Shapley & \shortstack{Attribution\\granularity} & \shortstack{Top-2\\recovery} & \shortstack{Centroid $\bar t_0$\\(true: 9)} & \shortstack{Centroid $\bar t_4$\\(true: 17)} & windows & wall-clock \\
\midrule
feature-ShapG & yes & feature & Yes & --- & --- & $5.9$k & $1.3$\,s \\
WindowSHAP & yes & time chunk & N/A & --- & --- & $25$k & $0.2$\,s \\
GS-SHAP & yes & group$\times$segment & No$^g$ & $13.0^g$ & $11.5^g$ & $0.8$k & $64.2$\,s \\
PAX-TS & \textbf{no} & cell $(t,i)$ & Yes & $9.4$ & $15.2$ & $111$k & $0.7$\,s \\
cell-ShapG (graph) & yes & cell & Yes & $9.7$ & $16.3$ & $706$k & $23.6$\,s \\
\textbf{cell-pShapley} & yes & cell & Yes & $9.4$ & $16.4$ & $111$k & $\mathbf{0.5}$\,s \\
\bottomrule
\end{tabular}
\end{table}

%% file: tables/redundancy.tex
\begin{table}[!htb]
\caption{Share of the redundant pair's joint value
$\Delta_{\mathrm{pair}}$ credited by each method (synthetic; $10$ seeds,
mean\,$\pm$\,std, \%; cell-pShapley at $m_{\pi}{=}8$). The PAX-TS column
is its (non-additive) fraction of total attribution mass.}
\label{tab:redundancy}
\centering\small
\setlength{\tabcolsep}{8pt}
\begin{tabular}{lrrr}
\toprule
corr$(x_0,x_1)$ & cell-pShapley & LOO & PAX-TS mass \\
\midrule
$0.98$ & $104\pm8$ & $50\pm3$ & $53\pm5$ \\
$0.80$ & $109\pm10$ & $42\pm8$ & $43\pm5$ \\
$0.20$ & $72\pm47$ & $84\pm32$ & $46\pm5$ \\
\bottomrule
\end{tabular}
\end{table}

%% file: tables/cellkpi.tex
\begin{table}[!htb]
\caption{Weighted slope $S$ (mean\,$\pm$\,std over three model-training seeds)
of the deletion curve under mean masking;
bold = best mean per row, and $|\phi|$ is the
unsigned cell-pShapley ranking. The last two rows count how often each
ranking beats a baseline over the 27 (case, seed) runs; exact sign
tests give $p<10^{-6}$ (cell-pShapley vs cell-LOO) and $p<10^{-4}$
(vs PAX-TS).}
\label{tab:cellkpi}
\centering\small
\setlength{\tabcolsep}{3pt}
\resizebox{\linewidth}{!}{%
\begin{tabular}{llrrrrrr}
\toprule
Dataset & Family & cell-pShapley & $|\phi|$ & cell-ShapG & TimeSHAP-c & PAX-TS & cell-LOO \\
\midrule
Jena          & LSTM        & $1.12\pm0.19$ & $0.68\pm0.08$ & $\mathbf{1.16}\pm0.25$ & $1.12\pm0.19$ & $0.55\pm0.02$ & $0.63\pm0.07$ \\
Jena          & Transformer & $0.80\pm0.03$ & $0.62\pm0.03$ & $\mathbf{0.80}\pm0.03$ & $0.80\pm0.03$ & $0.52\pm0.02$ & $0.60\pm0.06$ \\
Jena          & DLinear     & $\mathbf{1.95}\pm0.20$ & $0.85\pm0.07$ & $1.93\pm0.20$ & $1.95\pm0.20$ & $0.81\pm0.09$ & $0.90\pm0.06$ \\
ETTh1         & LSTM        & $\mathbf{0.93}\pm0.01$ & $0.92\pm0.00$ & $0.92\pm0.01$ & $0.92\pm0.01$ & $0.91\pm0.01$ & $0.92\pm0.00$ \\
ETTh1         & Transformer & $\mathbf{0.92}\pm0.00$ & $0.91\pm0.01$ & $0.92\pm0.00$ & $0.91\pm0.01$ & $0.90\pm0.01$ & $0.90\pm0.00$ \\
ETTh1         & DLinear     & $\mathbf{1.00}\pm0.02$ & $0.93\pm0.00$ & $0.99\pm0.02$ & $1.00\pm0.02$ & $0.94\pm0.00$ & $0.93\pm0.00$ \\
Exchange-Rate & LSTM        & $0.94\pm0.09$ & $0.95\pm0.04$ & $\mathbf{0.99}\pm0.06$ & $0.94\pm0.09$ & $0.97\pm0.05$ & $0.83\pm0.09$ \\
Exchange-Rate & Transformer & $0.70\pm0.03$ & $\mathbf{0.71}\pm0.04$ & $0.70\pm0.04$ & $0.69\pm0.04$ & $0.67\pm0.12$ & $0.61\pm0.12$ \\
Exchange-Rate & DLinear     & $\mathbf{1.82}\pm0.13$ & $0.91\pm0.01$ & $1.78\pm0.12$ & $1.82\pm0.13$ & $0.98\pm0.00$ & $0.91\pm0.02$ \\
\midrule
\multicolumn{2}{l}{median $S$} & $\mathbf{0.932}$ & $0.901$ & $0.941$ & $0.930$ & $0.903$ & $0.876$ \\
\multicolumn{2}{l}{beats cell-LOO} & $26$ & $16$ & $24$ & $26$ & $16$ & --- \\
\multicolumn{2}{l}{beats PAX-TS} & $24$ & $14$ & $24$ & $24$ & --- & $10$ \\
\bottomrule
\end{tabular}}
\end{table}

%% file: tables/real.tex
\begin{table}[!htb]
\caption{Real-data \emph{when}/horizon-impact across families ($K=1$,
cell-pShapley $m_{\pi}{=}32$, mean baseline). \textbf{when-centroid} =
mean input lag weighted by $g(t)$ ($0$=oldest); \textbf{settle} = horizon step
at which the ranking reaches $\tau_h\ge0.9$ (single training runs).}
\label{tab:real}
\centering\small
\setlength{\tabcolsep}{6pt}
\begin{tabular}{llrlrr}
\toprule
Dataset & Family & test/naive & top feature & when-centroid & settle $\tau{\ge}0.9$ \\
\midrule
Jena          & LSTM        & $0.051$ & Tpot (K)     & $12.5$ & $16$ \\
Jena          & Transformer & $0.051$ & Tpot (K)     & $11.9$ & $17$ \\
Jena          & DLinear     & $0.012$ & rho (g/m**3) & $11.4$ & $2$ \\
ETTh1         & LSTM        & $0.048$ & OT           & $12.6$ & $5$ \\
ETTh1         & Transformer & $0.046$ & OT           & $13.0$ & $1$ \\
ETTh1         & DLinear     & $0.019$ & OT           & $12.2$ & $1$ \\
Exchange-Rate & LSTM        & $0.382$ & AUS          & $14.3$ & $1$ \\
Exchange-Rate & Transformer & $0.369$ & AUS          & $11.5$ & $4$ \\
Exchange-Rate & DLinear     & $0.034$ & AUS          & $11.5$ & $4$ \\
\bottomrule
\end{tabular}
\end{table}

%% file: tables/linear_gt.tex
\begin{table}[!htb]
\caption{Agreement with the closed-form DLinear oracle: signed Pearson,
per-cell $\tau$ and Spearman, and the max-abs centroid difference in lags.}
\label{tab:linear-gt}
\centering\small
\setlength{\tabcolsep}{7pt}
\begin{tabular}{llrrrr}
\toprule
Dataset & Estimator & signed Pearson & per-cell $\tau$ & Spearman & centroid $\Delta$ (lags) \\
\midrule
Jena          & \textbf{cell-pShapley} & $1.000$ & $1.000$ & $1.000$ & $0.00$ \\
              & cell-ShapG             & $0.986$ & $0.969$ & $0.980$ & $0.11$ \\
\addlinespace
ETTh1         & \textbf{cell-pShapley} & $1.000$ & $1.000$ & $1.000$ & $0.00$ \\
              & cell-ShapG             & $0.916$ & $0.904$ & $0.954$ & $0.50$ \\
\addlinespace
Exchange-Rate & \textbf{cell-pShapley} & $1.000$ & $1.000$ & $1.000$ & $0.00$ \\
              & cell-ShapG             & $0.871$ & $0.841$ & $0.872$ & $0.05$ \\
\bottomrule
\end{tabular}
\end{table}

%% file: tables/sensitivity.tex
\begin{table}[!htb]
\caption{Shift of the when-centroid, in lags, from the deployed setting of
each control run; the payoff columns keep the mean reference.
$^\dagger$ marks a variant that also changes which feature ranks first.}
\label{tab:sensitivity}
\centering\small
\setlength{\tabcolsep}{6pt}
\begin{tabular}{llrrrr}
\toprule
 & & \multicolumn{2}{c}{reference (vs mean)}
   & \multicolumn{2}{c}{payoff (vs error-reduction)} \\
\cmidrule(lr){3-4}\cmidrule(lr){5-6}
Dataset & Family & marginal & conditional & prediction & $R^2$ \\
\midrule
Jena          & LSTM        & $+1.4$ & $-0.6$ & $+0.2$ & $+1.2$ \\
Jena          & Transformer & $-0.1$ & $-0.7$ & $-0.1$ & $+0.7$ \\
Jena          & DLinear     & $+0.6$ & $+0.0$ & $-0.0$ & $-0.0$ \\
ETTh1         & LSTM        & $+0.4$ & $-1.0$ & $+0.2$ & $-0.5$ \\
ETTh1         & Transformer & $+0.2$ & $-0.4$ & $-0.3$ & $-0.9$ \\
ETTh1         & DLinear     & $+0.3$ & $+0.2$ & $-0.0$ & $+0.4$ \\
Exchange-Rate & LSTM        & $-0.1$ & $-3.0$ & $-0.0$ & $-0.4$ \\
Exchange-Rate & Transformer & $-0.5$ & $+0.3$ & $+0.1$ & $-0.0$ \\
Exchange-Rate & DLinear     & $-0.0$ & $+0.7$ & $+0.0$$^\dagger$ & $+0.3$ \\
\bottomrule
\end{tabular}
\end{table}

%% file: tables/budget.tex
\begin{table}[!htb]
\caption{Convergence of cell-pShapley against a high-budget reference of 128 antithetic pairs,
averaged over 3 sampling seeds; column groups are the estimation budget
$m_{\pi}$. The \emph{unpaired} group spends the same budget on independent
permutations instead of antithetic pairs: DLinear is exact only when paired.}
\label{tab:budget}
\centering\footnotesize
\setlength{\tabcolsep}{2pt}
\begin{tabular}{ll>{\raggedleft\arraybackslash}p{3.0em}>{\raggedleft\arraybackslash}p{3.0em}>{\raggedleft\arraybackslash}p{3.0em}>{\raggedleft\arraybackslash}p{3.0em}>{\raggedleft\arraybackslash}p{3.0em}>{\raggedleft\arraybackslash}p{3.0em}>{\raggedleft\arraybackslash}p{3.0em}>{\raggedleft\arraybackslash}p{3.0em}>{\raggedleft\arraybackslash}p{3.0em}}
\toprule
 & & \multicolumn{3}{c}{\shortstack{relative RMSE\\of $\phi$, paired}}
   & \multicolumn{3}{c}{\shortstack{relative RMSE\\of $\phi$, unpaired}}
   & \multicolumn{3}{c}{\shortstack{when-centroid\\error (lags)}} \\
\cmidrule(lr){3-5}\cmidrule(lr){6-8}\cmidrule(lr){9-11}
Dataset & Family & $1$ & $32$ & $64$ & $1$ & $32$ & $64$ & $1$ & $32$ & $64$ \\
\midrule
Jena          & LSTM        & $0.224$ & $0.042$ & $0.034$ & $0.601$ & $0.107$ & $0.074$ & $0.18$ & $0.01$ & $0.03$ \\
Jena          & Transformer & $0.194$ & $0.035$ & $0.025$ & $0.566$ & $0.094$ & $0.071$ & $0.09$ & $0.00$ & $0.01$ \\
Jena          & DLinear     & $0.000$ & $0.000$ & $0.000$ & $0.673$ & $0.120$ & $0.083$ & $0.00$ & $0.00$ & $0.00$ \\
ETTh1         & LSTM        & $0.064$ & $0.013$ & $0.010$ & $0.361$ & $0.052$ & $0.038$ & $0.10$ & $0.04$ & $0.02$ \\
ETTh1         & Transformer & $0.132$ & $0.026$ & $0.022$ & $0.387$ & $0.054$ & $0.043$ & $0.14$ & $0.02$ & $0.02$ \\
ETTh1         & DLinear     & $0.000$ & $0.000$ & $0.000$ & $0.301$ & $0.041$ & $0.033$ & $0.00$ & $0.00$ & $0.00$ \\
Exchange-Rate & LSTM        & $0.238$ & $0.049$ & $0.037$ & $0.600$ & $0.096$ & $0.078$ & $0.08$ & $0.04$ & $0.02$ \\
Exchange-Rate & Transformer & $1.753$ & $0.364$ & $0.259$ & $1.642$ & $0.397$ & $0.292$ & $0.59$ & $0.57$ & $0.26$ \\
Exchange-Rate & DLinear     & $0.000$ & $0.000$ & $0.000$ & $0.450$ & $0.090$ & $0.065$ & $0.00$ & $0.00$ & $0.00$ \\
\bottomrule
\end{tabular}
\end{table}

%% file: tables/onmanifold.tex
\begin{table}[!htb]
\caption{Mean-mask versus Gaussian conditional-mean
deletion. Entries report median $S$ and wins against each baseline over 27
(case, seed) runs; bold marks the highest median within each evaluator.}
\label{tab:onmanifold}
\centering\small
\setlength{\tabcolsep}{3pt}
\resizebox{\linewidth}{!}{%
\begin{tabular}{llrrrrrr}
\toprule
Evaluator & & cell-pShapley & $|\phi|$ & cell-ShapG & TimeSHAP-c & PAX-TS & cell-LOO \\
\midrule
mean mask    & median $S$    & $0.931$ & $0.901$ & $\mathbf{0.941}$ & $0.929$ & $0.902$ & $0.892$ \\
             & beats PAX-TS   & $24$ & $16$ & $24$ & $24$ & --- & $10$ \\
             & beats cell-LOO & $26$ & $16$ & $24$ & $26$ & $16$ & --- \\
\midrule
conditional  & median $S$    & $0.097$ & $\mathbf{0.269}$ & $0.054$ & $0.053$ & $0.212$ & $0.216$ \\
             & beats PAX-TS   & $3$ & $17$ & $3$ & $4$ & --- & $11$ \\
             & beats cell-LOO & $11$ & $21$ & $9$ & $10$ & $16$ & --- \\
\bottomrule
\end{tabular}}
\end{table}